\def\algbackskip{\hskip-\ALG@thistlm}
\newtheorem{theorem}{Theorem}[section]
\newtheorem{lemma}[theorem]{Lemma}
\begin{document}

%

%

\twocolumn[

\aistatstitle{Orthogonal Gradient Descent for Continual Learning}

\aistatsauthor{ Mehrdad Farajtabar
\And Navid Azizan$^{1}$ \And Alex Mott \And Ang Li }

\aistatsaddress{ DeepMind \And  CalTech \And DeepMind \And DeepMind } ]

\begin{abstract}
\vspace{-2mm}
Neural networks are achieving state of the art and sometimes super-human performance on learning tasks across a variety of domains. Whenever these problems require learning in a continual or sequential manner, however, neural networks suffer from the problem of \emph{catastrophic forgetting}; they forget how to solve previous tasks after being trained on a new task, despite having the essential capacity to solve both tasks if they were trained on both simultaneously. 
In this paper, we propose to address this issue from a parameter space perspective and study an approach to restrict the direction of the gradient updates to avoid forgetting previously-learned data. We present the Orthogonal Gradient Descent (OGD) method, which accomplishes this goal by projecting the gradients from new tasks onto a subspace in which the neural network output on previous task does not change and the projected gradient is still in a useful direction for learning the new task. Our approach utilizes the high capacity of a neural network more efficiently and does not require storing the previously learned data that might raise privacy concerns. Experiments on common benchmarks reveal the effectiveness of the proposed OGD method.
\vspace{-2mm}
\end{abstract}

\vspace{-2mm}
\section{Introduction}
\vspace{-2mm}
One critical component of intelligence is the ability to learn \emph{continuously}, when new information is constantly available but previously presented information is unavailable to retrieve. Despite their ubiquity in the real world, these problems have posed a long-standing challenge to artificial intelligence~\citep{thrun1995lifelong,hassabis2017neuroscience}. 

A typical neural network training procedure over a sequence of different tasks usually results in degraded performance on previously trained tasks if the model could not revisit the data of previous tasks. This phenomenon is called \emph{catastrophic forgetting}~\citep{mccloskey1989catastrophic,ratcliff1990connectionist,french1999catastrophic}. Ideally, an intelligent agent should be able to learn consecutive tasks without degrading its performance on those already learned.
With the deep learning renaissance~\citep{krizhevsky2012imagenet,hinton2006fast,simonyan2014very} this problem has been revived  ~\citep{srivastava2013compete,goodfellow2013empirical} with many follow-up studies~\citep{parisi2019continual}.

One probable reason for this phenomenon is that neural networks are usually trained by Stochastic Gradient Descent (SGD)---or its variants---where the optimizers produce gradients that are oblivious to past knowledge. These optimizers, by design, produce gradients that are purely a function of the current minibatch (or some smoothed average of a short window of them). This is a desirable feature when the training data is iid, but is not desirable when the training distribution shifts over time. 
In this paper, we present a system where the gradients produced on a training minibatch can avoid interfering with gradients produced on previous tasks.

The core idea of our approach, Orthogonal Gradient Descent (OGD), is to preserve the previously acquired knowledge by maintaining a space consisting of the gradient directions of the neural network predictions on previous tasks. Any update orthogonal to this gradient space change the output of the network minimally. When training on a new task, OGD projects the loss gradients of new samples perpendicular to this gradient space before applying back-propagation. 
Empirical results demonstrate that the proposed method efficiently utilizes the high capacity of the (often over-parameterized) neural network to learn the new data while minimizing the interference with the previously acquired knowledge. Experiments on three common continual learning benchmarks substantiate that OGD achieves state-of-the-art performance without the need to store the historical data.

\section{Preliminaries}
\label{sec:prelim}
Consider a \textit{continual learning} setting in which tasks $\{T_1, T_2, T_3, \ldots\}$ arrive sequentially. When a model is being trained on task $T_k$, any data from previous tasks $\{T_t~|~t<k\}$ is inaccessible.
Each data point $(x,y)\in T_k$ is a pair consists of input $x\in \mathbb{R}^d$ and a label $y$. For a $c$-class classification, $y$ is a $c$-dimensional one hot vector.
The prediction of the model on input $x$ is denoted by $f(x; w)$, where $w\in\mathbb{R}^p$ are parameters (weights) of the model (neural network). For classification problems, $f(x; w) \in \mathbb{R}^c$ where $f_j(x; w)$ is the $j$-th logit associated to $j$-th class.

The total loss on the training set (empirical risk) for task $t$ is denoted by
\begin{equation}L_t(w)=\sum_{(x,y)\in T_t} L_{(x,y)}(w),\end{equation}
where the per-example loss is defined as
\begin{equation}L_{(x,y)}(w) = \ell(y,f(x; w)),\end{equation}
and $\ell(\cdot,\cdot)$ is a differentiable non-negative loss function. 
For classification problems, a softmax cross entropy loss is commonly used, \textit{i.e.},
\begin{equation}
    \ell(y, f(x;w)) = - \sum_{j=1}^c y_j \log a_j,
\end{equation} where $a_j = \exp f_j(x;w)/\sum_k \exp f_k(x;w)$ is the $j$-th softmax output.

Two objects that frequently appear throughout the development of our method are the gradient of the \emph{loss}, $\nabla L_{(x,y)}(w)\in\mathbb{R}^p$, and the gradient of the \emph{model}, $\nabla f(x;w)\in\mathbb{R}^{p \times c}$, which are both with respect to $w$, and it is critically important to distinguish the two. In fact, the gradient of the loss, using the chain rule, can be expressed as
\begin{equation}
\label{eq:loss-vs-prediction}
    \nabla L_{(x,y)}(w) = \nabla f(x;w) \ell'(y,f(x;w)),
\end{equation}
where $\ell'(\cdot,\cdot) \in \mathbb{R}^c$ denotes the derivative of $\ell(\cdot,\cdot)$ with respect to its second argument, and $\nabla f(x;w)$ is the gradient of the model $f$ with respect to its second argument (\emph{i.e.}, the parameters). For the classification problem with cross entropy softmax loss, we have
\begin{equation}
\nabla f(x;w) = [\nabla f_1(x;w); \ldots; \nabla f_c(x;w), ],
\end{equation}
and the derivative of the loss becomes 
\begin{equation}
\ell'(y, f(x;w)) = [a_1-y_1, \ldots, a_c-y_c]^{\top},
\end{equation}
where, $ \nabla f_j(x;w) \in \mathbb{R}^{c} $ is  the gradient of the $j$-th logit with respect to parameters.
%

\begin{figure}[t]
    \centering
    \includegraphics[width=0.48\textwidth]{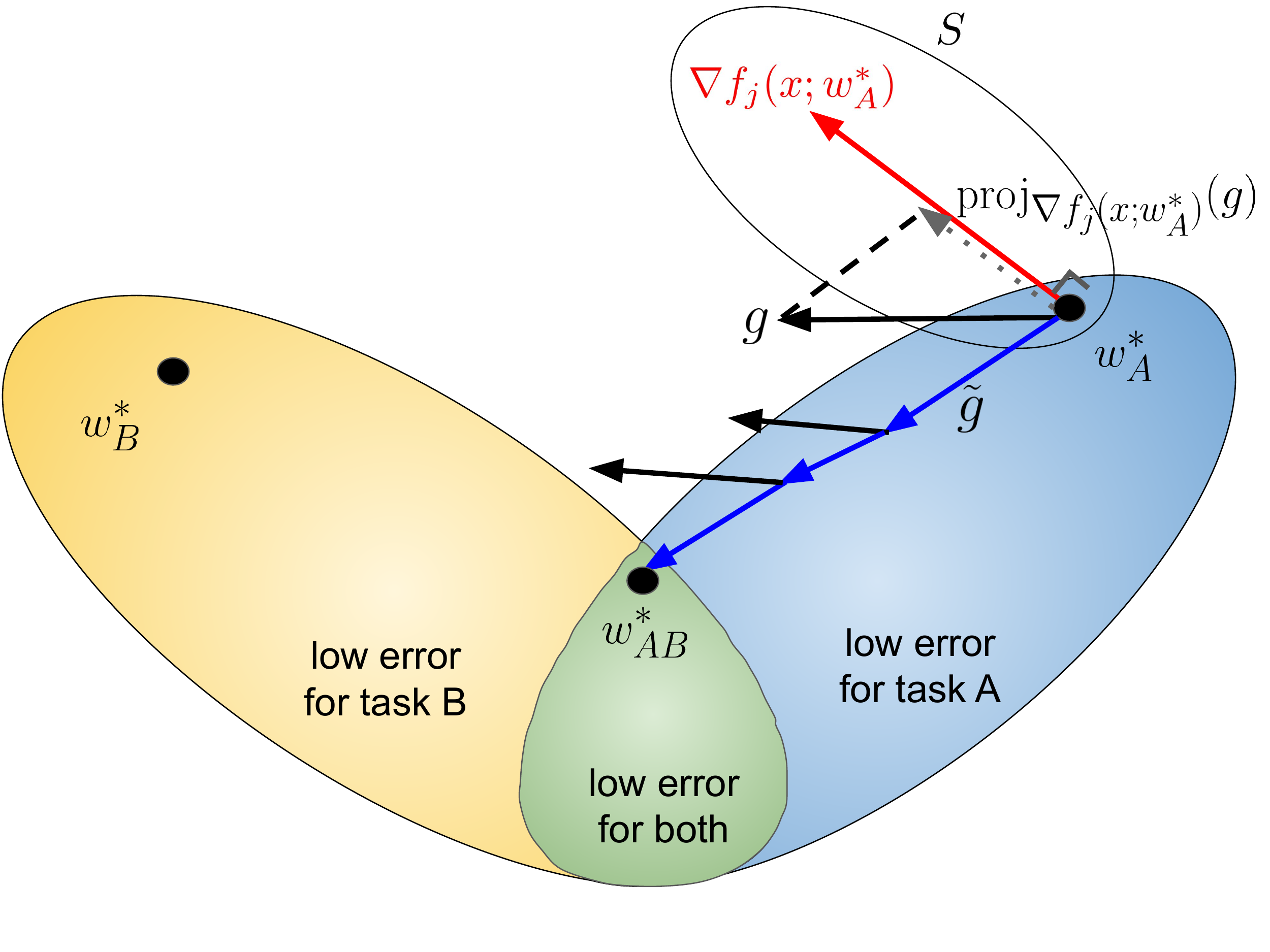}
    \vspace{-4mm}
    \caption{\small An illustration of how Orthogonal Gradient Descent corrects the directions of the gradients. $g$ is the original gradient computed for task B and $\tilde{g}$ is the projection of $g$ onto the orthogonal space \textit{w.r.t} the gradient $\nabla f_j(x;w_A^*)$ computed at task A. Moving within this (blue) space allows the model parameters to get closer to the low error (green) region for both tasks.}
    \label{fig:overview}
\end{figure}

\section{Orthogonal Gradient Descent}
\label{sec:proposed_ogd}
Catastrophic forgetting happens in neural networks when the gradient updates with respect to a new task are applied to the model without considering previous tasks. 
We propose the Orthogonal Gradient Descent method for mitigating this problem, which is based on modifying the direction of the updates to account for \emph{important directions} of previous tasks. Figure \ref{fig:overview} shows an illustration of the core idea of OGD which constrains the parameters to move within the orthogonal space to the gradients of previous tasks.

Suppose that the model has been trained on the task A in the usual way until convergence to a parameter vector $w^*_A$ so that the training loss/error is small or zero, and consider the effect of a small update to $W^*_A$.
In the high-dimensional parameter space of the model, there could be update directions causing large changes in the predictions from $x \in T_A$, while there also exist updates that minimally affect such predictions. In particular, moving locally along the direction of $\pm \nabla f_j(x;w)$ leads to the biggest change in model prediction $f_j(x;w)$ given any sample $x$, while moving orthogonal to $\nabla f_j(x;w)$ leads to the least change (or no change, locally) to the prediction of $x$. 
Supposing task A has $n_A$ data points in the stochastic gradient descent setting, there will be $n_A \times c$ gradient directions $\nabla f_j(x;w)$. In order to guarantee the least change to the predictions on task A, for an update towards task B, the update has to be orthogonal to the $n_A \times c$ directions $\{\nabla f_j(x;w)\}_{x \in T_A, j=1\ldots c}$.

Denoting the gradient of the loss for task B (which can be either stochastic, batch, or full) by $g$, we propose to ``orthogonalize'' it in a way that the new direction $\tilde{g}$ satisfies the above requirement, i.e.,
\begin{equation}
\tilde{g} \perp \nabla f_j(x;w),\quad \forall x \in T_A,j=1\ldots c~.
\end{equation}
In this case, moving along direction $\tilde{g}$ makes the least change to the neural network predictions on the previous task. As a result, we utilize the high capacity of neural networks more effectively. 
We know that a neural network is part of a high dimensional parameter space (larger than or comparable to the number of data points), so there always exist a direction that conforms to the orthogonality condition.

In continual leaning, while processing $T_B$, one does not have access to $T_A$ anymore to compute $\nabla f_j(x;{w})$ at the current parameter $w$. This means that, as an inherent limitation, we are unable to compute the exact directions that will produce least changes on Task A during training on Task B.
To tackle this issue note that the neural networks are often overparameterized, which implies that in a close vicinity of the optimum parameter for task A, there lies optimum parameters for both tasks A and B~\citep{azizan2018stochastic,li2018learning,allen2018convergence,azizan2019stochastic}.  For any parameter $w$ in that neighborhood, we can basically approximate $\nabla f(x;{w}) \approx \nabla f(x;{w^*_A})$ for all $x \in T_A$.
Therefore, we can use $\nabla f(x;{w^*_A})$ as proxy and satisfy
\begin{equation}
   \tilde{g} \perp  \nabla f_j(x; w^*_A), \quad  \forall x \in T_A, j=1\ldots c,
\end{equation}
for all (batch) loss gradients $\tilde{g}$ of task B. 
One can compute and store $\nabla f(x;{w^*_A})$ for all $x\in T_A$ when training on task A is done and task B is introduced. 

In practice, one does not need all $n_A \times c$ directions $\{\nabla f_j(x;w)\}_{x \in T_A, j=1\ldots c}$ to preserve the previously learned information. For example, per sample $x$, we can compute the gradient with respect to the average of the logits rather than use the individual logits themselves. We call this OGD-AVE in contrast to OGD-ALL.
Another alternative is to select the single logit corresponding to the ground truth label. For data point $x$ coming from $k$-th class ($y_k=1$), we try to only keep $\nabla f_k(x; w)$ invariant. 
This alternative referred to as OGD-GTL. Both OGD-GTL and OGD-AVE reduce the storage size by a factor of $c$. 
We use OGD-GTL in all of our following experiments and also empirically observe that OGD-GTL slightly outperforms the other variants of OGD (\emph{c.f} appendix ~\ref{sec:app_variants}). To further control the amount of memory required for this process, we store only a subset of gradients from each task in our experiments (200 for the Mnist experiments). While this potentially misses some information, we find in practice that it is sufficient and that increasing the collection size beyond this provides diminishing returns.
One downside of this method (similar to other state-of-the-art methods such as~\citep{chaudhry2018efficient, lopez2017gradient,riemer2018learning}) is that the required storage size grows with the number of tasks. An interesting extension to our method is to dynamically remove less significant directions from set $S$ or perform principal component analysis on the gradient space, which are left for future work.

\begin{algorithm}[t]
  \caption{ Orthogonal Gradients Descent}\label{alg:ogd}
      \hspace*{\algorithmicindent} \textbf{Input} Task sequence $T_1, T_2, T_3, \ldots$ learning rate $\eta$\\
    \hspace*{\algorithmicindent} \textbf{Output} The optimal parameter $w$.
  \begin{algorithmic}[1]
      \State {\bf Initialize} $S\gets\{\}$;  $w\gets w_0$
      \For{Task ID $k = 1,2,3,\dots$}
        \Repeat 
            \State $g\gets$ Stochastic/Batch Gradient for $T_k$ at $w$
            \State $\tilde{g}=g-\sum_{v\in S} \mathrm{proj}_v (g)$
            \State $w \gets w- \eta \tilde{g}$
        \Until{ convergence}
        \For{$(x,y) \in T_t$ and $k\in [1, c]\text{ s.t. }y_k=1$} 
            \State $u\gets \nabla f_k(x;w)-\sum_{v\in S}\mathrm{proj}_v(\nabla f_k(x;w))$
            \State $S \gets S\cup \{u\}$
        \EndFor
      \EndFor
  \end{algorithmic}
\end{algorithm}

We now proceed to formally introduce OGD-GTL. Task B's loss gradients should be perpendicular to the \emph{space} of all previous model gradients, namely 
\begin{equation*}
    S = \mathrm{span} \{ \nabla f_k(x,{w^*_A})~|~(x,y)\in T_A \wedge k\in[1,c]\wedge y_k=1\}.
\end{equation*}
We compute the orthogonal basis for $S$ as $\{v_1,v_2,\ldots\}$ using the Gram-Schmidt procedure on all gradients \textit{w.r.t.} samples $(x_i, y_i) \in T_A$ in task A. We iteratively project them to the previously orthogonalized vectors:
\begin{align*}
   &  v_{1} = \nabla f_{k_1}(x_1;{w^*_A})~,  \\
  & v_{i} = \nabla f_{k_i}(x_{i};{w^*_A}) - \sum_{j < i}
   \mathrm{proj}_{v_j}(\nabla f(x_{i};{w^*_A})),
\end{align*}
where, $k_i$ represents the ground-truth index such that $y_{i, k_i}=1$, and $\mathrm{proj}_v(u) = \frac{\langle u, v \rangle}{ \langle v, v \rangle} v$  is the  projection (vector) of $u$ in the direction of $v$. 
Given the orthogonal basis $S = \{ v_{1}, \ldots, v_{n_A} \}$  for the gradient subspace of task A, we modify the original gradients $g$ of  task B to new gradients $\tilde{g}$ orthogonal to $S$, \textit{i.e.},
\begin{equation}
    \tilde{g} = g - \sum_{i=1}^{n_A} \mathrm{proj}_{v_{i}}(g)~. 
\end{equation}
The new direction $-\tilde{g}$ is still a descent direction (\emph{i.e.} $\langle-\tilde{g}, g\rangle \leq 0$) for task B meaning that $\exists~ \epsilon > 0$ such that for any learning rate $0 < \eta < \epsilon$, taking the step $\eta \tilde{g}$ reduces the loss.  
%

\begin{lemma}
\label{lem:descent}
Let $g$ be the gradient of loss function $L(w)$ and $S=\{v_1, \ldots, v_n\}$ is the orthogonal basis. Let $\tilde{g} = g - \sum_i^k {\mathrm{proj}_{v_i}(g)}$. Then, $-\tilde{g}$ is also a descent direction for $L(w)$.
\end{lemma}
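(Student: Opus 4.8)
The plan is to directly verify the defining inequality of a descent direction, namely that $\langle -\tilde{g}, g\rangle \le 0$, which guarantees that the directional derivative of $L$ along $-\tilde{g}$ is nonpositive, so that a sufficiently small step in that direction does not increase the loss. The entire argument reduces to one inner-product computation that exploits the orthogonality of the basis $\{v_1,\dots,v_n\}$.

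First I would expand $\langle \tilde{g}, g\rangle$ using the definition $\tilde{g} = g - \sum_i \mathrm{proj}_{v_i}(g)$ with $\mathrm{proj}_{v_i}(g) = \frac{\langle g, v_i\rangle}{\langle v_i, v_i\rangle} v_i$. By bilinearity of the inner product this gives
\[
\langle \tilde{g}, g\rangle = \langle g, g\rangle - \sum_i \frac{\langle g, v_i\rangle}{\langle v_i, v_i\rangle}\langle v_i, g\rangle = \|g\|^2 - \sum_i \frac{\langle g, v_i\rangle^2}{\|v_i\|^2}.
\]
Next I would compute $\|\tilde{g}\|^2$ in the same way; here the orthogonality $\langle v_i, v_j\rangle = 0$ for $i\neq j$ is the crucial ingredient, since it kills every off-diagonal term in the double sum $\sum_{i,j}\frac{\langle g,v_i\rangle \langle g,v_j\rangle}{\|v_i\|^2\|v_j\|^2}\langle v_i,v_j\rangle$, leaving only $\sum_i \frac{\langle g,v_i\rangle^2}{\|v_i\|^2}$. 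This yields $\|\tilde{g}\|^2 = \|g\|^2 - \sum_i \frac{\langle g,v_i\rangle^2}{\|v_i\|^2}$, so that $\langle \tilde{g}, g\rangle = \|\tilde{g}\|^2 \ge 0$ and hence $\langle -\tilde{g}, g\rangle = -\|\tilde{g}\|^2 \le 0$. Equivalently, one may observe that $\tilde{g} = (I-P)g$, where $P$ is the orthogonal projector onto $\mathrm{span}(S)$; since $I-P$ is symmetric and idempotent, $\langle \tilde{g}, g\rangle = \langle (I-P)g, (I-P)g\rangle = \|\tilde{g}\|^2$ follows immediately.

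There is no serious obstacle; the only point that needs care is the distinction between weak and strict descent. The computation shows $\langle -\tilde{g}, g\rangle \le 0$, with equality precisely when $\tilde{g}=0$, i.e.\ when $g$ already lies entirely in $\mathrm{span}(S)$. To justify the stronger claim made just before the lemma---that the step $\eta\tilde{g}$ strictly reduces the loss for small $\eta$---I would assume $\tilde{g}\neq 0$, so that $\langle -\tilde{g}, g\rangle = -\|\tilde{g}\|^2 < 0$, and then invoke the first-order expansion $L(w-\eta\tilde{g}) = L(w) - \eta\|\tilde{g}\|^2 + o(\eta)$ to produce an $\epsilon>0$ for which every $0<\eta<\epsilon$ decreases the loss.
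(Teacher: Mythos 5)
Your proposal is correct and reaches the same key identity as the paper, namely $\langle -\tilde{g}, g\rangle = -\|\tilde{g}\|^2 \le 0$, using the same essential ingredient (orthogonality of the residual to $\mathrm{span}(S)$); the paper just phrases it by decomposing $g = \tilde{g} + \sum_i \mathrm{proj}_{v_i}(g)$ and killing the cross term, whereas you expand the inner products explicitly. Your added remarks on the projector $I-P$ and on the weak-versus-strict descent distinction are accurate refinements but do not change the argument.
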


\begin{proof}
For a vector $u$ to be a descent direction it should satisfy $\langle u, g \rangle \le 0$. To begin with, we have
\begin{align}
\langle -\tilde{g},\,\, g \rangle & = \langle -\tilde{g}, \tilde{g}+\sum_{i=1}^k {\mathrm{proj}_{v_i}(g)}\rangle\\
&=-\|\tilde{g}\|^2-\langle \tilde{g}, \sum_{i=1}^k {\mathrm{proj}_{v_i}(g)}\rangle~.\label{eq:lemma_a2_1}
\end{align}
Since $\tilde{g}=g-\sum_{i=1}^k {\mathrm{proj}_{v_i}(g)}$ is orthogonal to the space spanned by $S$ and $\sum_{i=1}^k {\mathrm{proj}_{v_i}(g)}$ is a vector spanned by $S$, hence $
    \langle \tilde{g}, \sum_{i=1}^k {\mathrm{proj}_{v_i}(g)}\rangle=0$.
Substituting this into Eq.~\ref{eq:lemma_a2_1}, we have $
    \langle -\tilde{g},\,\, g \rangle=-\|\tilde{g}\|^2\le0$.
Therefore, $-\tilde{g}$ is a descent direction for $L(w)$ while being perpendicular to $S$.
\end{proof}

We can  easily extend the method to handle multiple tasks. Algorithm~\ref{alg:ogd} presents this general case.
In this work, we apply the proposed Orthogonal Gradient Descent (OGD) algorithm to continual learning on consecutive tasks. Its application potentially goes beyond this special case and can be utilized whenever one wants the gradient steps minimally interfere with the previous learned data points and potentially reduce the access or iterations over them.
%

It is worth reiterating the distinction made in Section~\ref{sec:prelim} between using the gradient of the logits---as OGD does---and using the gradient of the loss---as many other methods do, including the A-GEM baseline (\cite{chaudhry2018efficient}) in the next section. As Equation~\eqref{eq:loss-vs-prediction} indicates, the gradient of the loss $\nabla L_{(x,y} (w)$ can be zero or close to zero for the examples that are well fitted ($\ell'(y, f(x;w)) \approx 0$) carrying effectively low information on the previous tasks. In contrast, OGD works directly with the model (through its gradeint $\nabla f(x;w)$) which is the essential information to be preserved.

\section{Experiments}
\label{sec:Experiments}
\vspace{-3mm}
We performed experiments in this section on three continual learning benchmark: \textit{Permuted Mnist}, \textit{Rotated Mnist}, and \textit{Split Mnist}.

\textbf{Baselines.} We considered the following baselines for comparison purposes.
(1) \emph{EWC} \citep{kirkpatrick2017overcoming}: one of the pioneering regularization based methods that uses fisher information diagonals as important weights.
(2) \emph{A-GEM} \citep{chaudhry2018efficient}: using loss gradients of stored previous data in an inequality constrained optimization.
(3) \emph{SGD}: Stochastic Gradient Descent optimizing tasks one after the other. It can be seen as lower bound telling us what happens if we do nothing to explicitly retain information from the previous task(s).
(4) \emph{MTL}: Multi Task Learning baseline using stochastic gradient descent with full access to previous data. In this setting, during task $T_t$, we trained the model on batches containing all data $T_{\leq t}$. This can be considered a sort of upper bound on the performance.

\textbf{Setup.}
We used a consistent training setup for all Mnist experiments so that we can directly compare the effects of the model across tasks and methods.
We always trained each task for 5 epochs. 
The number of epochs was chosen to achieve saturated performance on the first task classification problem. 
The performance numbers do not change substantially when trained for more epochs and, crucially, the relative performance between the different methods is identical with more training epochs. 
We used a batch size of 10 similar to~\cite{chaudhry2018efficient,lopez2017gradient}.
We found that batch size was not a strong factor in the performance, other than in its interplay with the number of epochs. For fewer than 5 epochs, the batch size had a noticeable effect because it significantly changed the number of batch updates. 

Large learning rates do degrade the performance of OGD (the larger the learning rate, the more likely a gradient update violates the locality assumption).  We chose a learning rate of $10^{-3}$, consistent with other studies~\citep{kirkpatrick2017overcoming,chaudhry2018efficient}, and small enough that decreasing it further did not improve the performance.
%
For all experiments the same architecture is used. The network is a three-layer MLP with 100 hidden units in two layers and 10 logit outputs. Every layer except the final one uses ReLU activation. The loss is Softmax cross-entropy, and the optimizer is stochastic gradient descent. This setting is similar to previous works~\citep{chaudhry2018efficient,kirkpatrick2017overcoming}. 
At the end of every task boundary we performed some processing required by the method. For OGD, this means computing the orthogonal gradient directions as described in Section~\ref{sec:proposed_ogd}. For A-GEM, this means storing some examples from the ending task to memory. For EWC, this means freezing the model weights and computing the fisher information. 
Both OGD and A-GEM need an storage. A-GEM for actual data points and OGD for the gradients of the model on previous tasks. We set the storage size for both methods to 200.
Last but not least, in all the experiments the mean and standard deviation of the \emph{test error} on the hold out Mnist test set are demonstrated using 10 independent random runs for 2 and 3 task experiments and 5 independent runs for 5 task experiments.

\begin{figure}[!tb]
    \centering
    \includegraphics[width=0.48\textwidth]{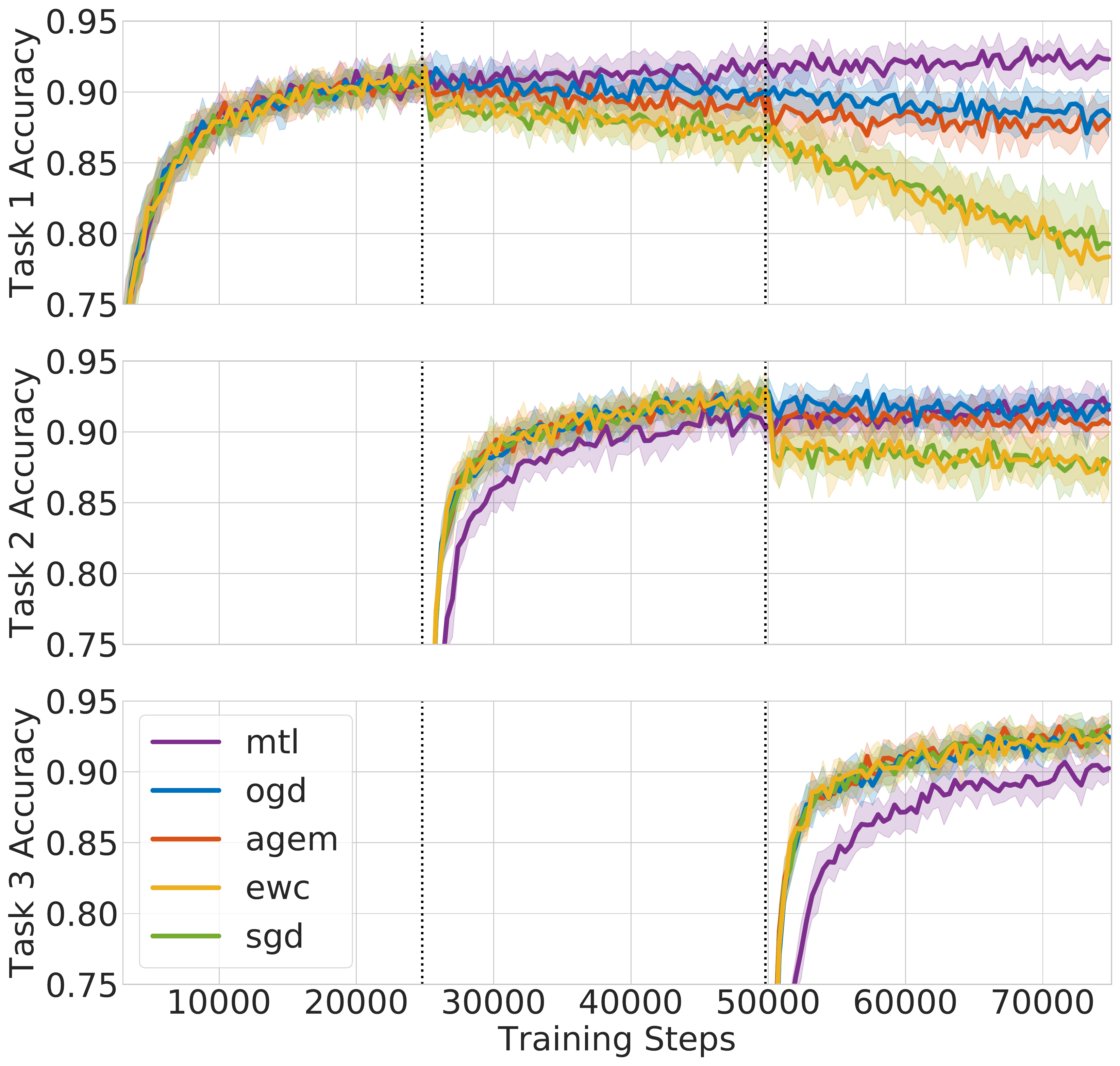} 
    \caption{\small Performance of different methods on permuted Mnist task. 3 different permutations ($p_1$, $p_2$, and $p_3$) are used and the model is trained to classify Mnist digits under permutation $p_1$ for 5 epochs, then under $p_2$ for 5 epochs and then under $p_3$ for 5 epochs. The vertical dashed lines represent the points in the training where the permutations switch. The top plot reports the accuracy of the model on batches of the Mnist test set under $p_1$; the middle plot, under $p_2$; and the bottom plot under $p_3$. The y-axis is truncated to show the details. Note that MTL represents a setting where the model is directly trained on all previous tasks. Because we keep constant batch size and number of epochs, the MTL method effectively sees one third of the task 3 data that other methods do. This is the reason that MTL learns slower on task 3 than other methods.}
    \label{fig:permuted_mnist_training}
\end{figure}

\subsection{Permuted Mnist}
\label{sec:permuted_mnist}
We tested our method on the Permuted Mnist setup described in~\citep{goodfellow2013empirical} and utilized in~\citep{kirkpatrick2017overcoming,chaudhry2018efficient} too. In this setup we generated a series of 3 permutations $p_1$, $p_2$, and $p_3$ that shuffle the pixels in an Mnist image. We designated task $T_i$ as the problem of classifying Mnist digits that have been shuffled under permutation $p_i$. We chose these permutations randomly so each task is equally hard and so the difference in accuracy between examples from task 1 and examples from task 3 after task 3 has been trained is a measure of how much the network is able to remember $p_1$.

Figure~\ref{fig:permuted_mnist_training} shows the accuracy of OGD and baselines on the Permuted Mnist task. The plot shows that OGD retains performance on task 1 examples as well as A-GEM even after training on task 3. Both methods perform slightly worse than a model that is able to train on all previous tasks (MTL), but significantly better than the naive sequential model (SGD) and than EWC.

\begin{table}[!tb]
\centering
\setlength\tabcolsep{3pt}
\resizebox{\linewidth}{!}{
\begin{tabular}{l|ccccc}
\toprule
	 & \multicolumn{5}{c}{\bf Accuracy $\pm$ Std. ($\%$)} \\
	 & \bf Task 1 & \bf Task 2 & \bf Task 3 & \bf Task 4 & \bf Task 5	\\
	\midrule
	\sc mtl & $93.2 \pm 1.3$ & $91.5 \pm 0.5$ & $91.3 \pm 0.7$ & $91.3 \pm 0.6$ & $88.4 \pm 0.8$\\
	\midrule
	\sc ogd & $79.5 \pm 2.3$ & ${\bf 88.9} \pm 0.7$ & ${\bf 89.6} \pm 0.3$ & ${\bf 91.8} \pm 0.9$ & $92.4 \pm 1.1$\\
	\bf \sc a-gem & ${\bf 85.5} \pm 1.7$ & $87.0 \pm 1.5$ & ${\bf 89.6} \pm 1.1$ & $91.2 \pm 0.8$ & ${\bf 93.9} \pm 1.0$\\
	\sc ewc & $64.5 \pm 2.9$ & $77.1 \pm 2.3$ & $80.4 \pm 2.1$ & $87.9 \pm 1.3$ & $93.0 \pm 0.5$\\
	\sc sgd & $60.6 \pm 4.3$ & $77.6 \pm 1.4$ & $79.9 \pm 2.1$ & $87.7 \pm 2.9$ & $92.4 \pm 1.1$\\
	\bottomrule
\end{tabular}
}
\label{tab:perm_mnist_accuracy_5task}
\caption{\small \textit{Permuted Mnist}: The accuracy of models  for test examples from the indicated class after being trained on all tasks in sequence, except the multi-task setup (\textsc{mtl}). The best continual learning results are highlighted in \textbf{bold}.}
\end{table}

We extended this experiments to $5$ permutations and tasks in the same manner. For this experiment, we evaluated the classifier after training had completed (at the end of task 5) and measured the accuracy for examples from each of task 1\ldots 5. Table~\ref{tab:perm_mnist_accuracy_5task} reports these accuracies for OGD and the baseline training methods. The results suggest that the overall performance of OGD is significantly better than EWC and SGD while being on par with A-GEM. 

\subsection{Rotated Mnist}
\label{sec:rotated_mnist}

We further evaluated our approach on identifying rotated Mnist digits. The training setup is similar to Permuted Mnist except that instead of arbitrary permutation, we used fixed rotations of the Mnist digits. Here we started with a two task problem: task 1 is to classify standard Mnist digits and then task 2 is to classify those digits rotated by a fixed angle. 

Figure~\ref{fig:rotated_mnist} shows the accuracy of the model when classifying task 1 examples (normal, un-rotated Mnist digits) after the end of training on task 2 (rotated Mnist digits). We report this as a function of the angle of rotation. One can see that, as the angle of rotation increases, the task becomes harder. Even in this harder setting, we still observe that OGD and A-GEM exhibit similar levels of performance.

\begin{figure}[!tb]
 \centering
 \includegraphics[width=0.49\textwidth]{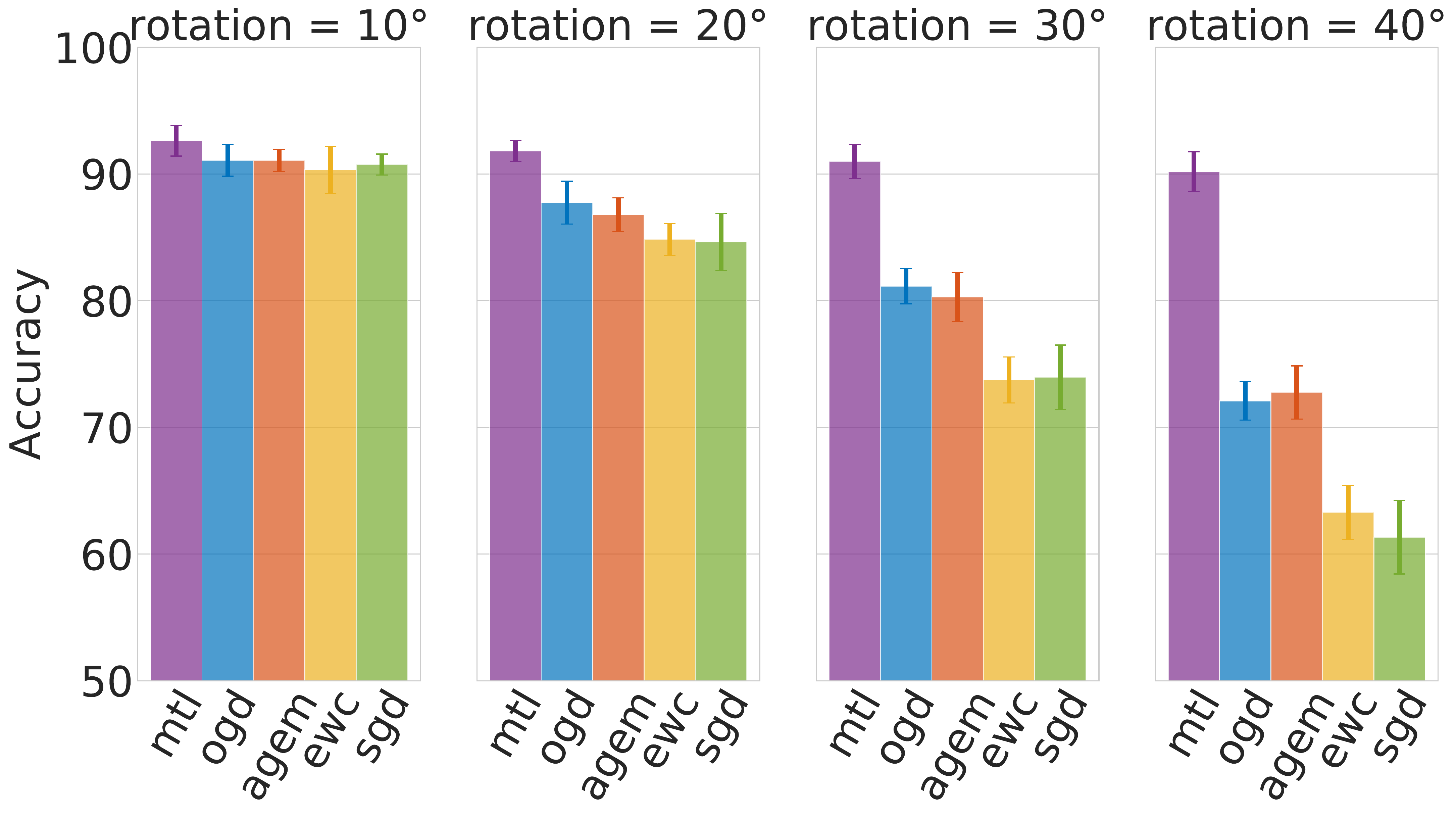}
 \caption{\small \textit{Rotated Mnist}: Accuracies of multiple continual learning methods. Every classifier is trained for 5 epochs on standard Mnist and then trained for another 5 epochs on a variant of Mnist whose images are rotated by the specified angle. The accuracy is computed over the entire original (un-rotated) Mnist test set after the model being trained on the rotated dataset. Each bar represents the mean accuracy over 10 independent runs and the error bars reflect their standard deviations. MTL represents the (non-continual) multi-task learning setting where the model is trained with the combined data from all previous tasks.}
 \label{fig:rotated_mnist}
\end{figure}

In the same way as the previous experiment, we extended the rotated Mnist experiment to more tasks by training a classifier on 5 rotated Mnist tasks with increasing angle of rotation. We defined the tasks as classification under angles of $T_1 = Rot(0^\circ)$, $T_2 = Rot(10^\circ)$, \ldots, $T_5 = Rot(40^\circ)$, and train the models in that order. Table~\ref{tab:rot_mnist_accuracy_5task} shows the accuracy of the fully-trained model at classifying examples from each tasks. We can observe that OGD outperforms other methods on 10, 20, and 30 degree rotations.

\begin{table}[!tb]
\centering
\setlength\tabcolsep{3pt}
\resizebox{\linewidth}{!}{
\begin{tabular}{lccccc}
\toprule
	 & \multicolumn{5}{c}{\bf Accuracy $\pm$ Std. ($\%$)} \\
	 & \bf Task 1 & \bf Task 2 & \bf Task 3 & \bf Task 4 & \bf Task 5	\\
	\midrule
	\sc mtl & $92.1 \pm 0.9$ & $94.3 \pm 0.9$ & $95.2 \pm 0.9$ & $93.4 \pm 1.1$ & $90.5 \pm 1.5$\\
	\midrule
	\sc ogd & ${\bf 75.6} \pm 2.1$ & ${\bf 86.6} \pm 1.3$ & ${\bf 91.7} \pm 1.1$ & $94.3 \pm 0.8$ & $93.4 \pm 1.1$\\
    \sc a-gem & $72.6 \pm 1.8$ & $84.4 \pm 1.6$ & $91.0 \pm 1.1$ & $93.9 \pm 0.6$ & ${\bf 94.6} \pm 1.1$\\
	\sc ewc & $61.9 \pm 2.0$ & $78.1 \pm 1.8$ & $89.0 \pm 1.6$ & $94.4 \pm 0.7$ & $93.9 \pm 0.6$\\
	\sc sgd & $62.9 \pm 1.0$ & $76.5 \pm 1.5$ & $88.6 \pm 1.4$ & ${\bf 95.1} \pm 0.5$ & $94.1 \pm 1.1$\\
	\bottomrule
\end{tabular}
}
\caption{\small \textit{Rotated Mnist}: The accuracy of models  for test examples from the indicated class after being trained on all tasks in sequence, except the multi-task setup (\textsc{mtl}). The best continual learning results are highlighted in \textbf{bold}.}
\label{tab:rot_mnist_accuracy_5task}
\end{table}

\subsection{Split Mnist}
\label{sec:split_mnist}

We also tested OGD in a setting where the labels between task 1 and task 2 are disjoint. We followed the setup for split Mnist laid out in~\cite{zenke2017continual} with some variations.  We defined a set of tasks $T_1$ \ldots $T_N$, with task $T_i$ defined by a series of integers $t_i^1\ldots t_i^{k_i}$ with $0\leq t_i^j \leq 10$ and $t_i^j = t_{i'}^{j'}$ if and only if $i=i'$ and $j=j'$. For each task $T_i$, then, the task is to classify Mnist digits with labels in $\{t_i^j\}$.

Because a given task does not contain all labels, we used a slightly different architecture for this task compared to other tasks. Instead of having a single output layer containing 10-logits for all the Mnist classes, we used separate heads for each task, where each head has the same number of logits as there are classes in the associated task.  This means that, for each task $T_i$, the softmax and cross-entropy calculation only runs over the logits and labels $\{t_i^j\}$. We found that this model has higher performance under all methods than a model using a joint head.

We began with a two task classification problem, where the Mnist dataset is split into two disjoint sets each containing 5 labels. The tasks are then just to classify examples from the set associated with each task. Figure~\ref{fig:split_mnist_2class} shows the accuracy of the fully-trained model to classify images from task $T_1$. We report the results for 5 different partitions of the labels into the task sets, to ensure that the partition does not have a strong effect on the results. In all cases, we observe the OGD performs the best, beating A-GEM again. We also observe that the performance order is preserved across different configurations of the experiment.
\begin{figure}[!tb]
 \centering
 \includegraphics[width=0.49\textwidth]{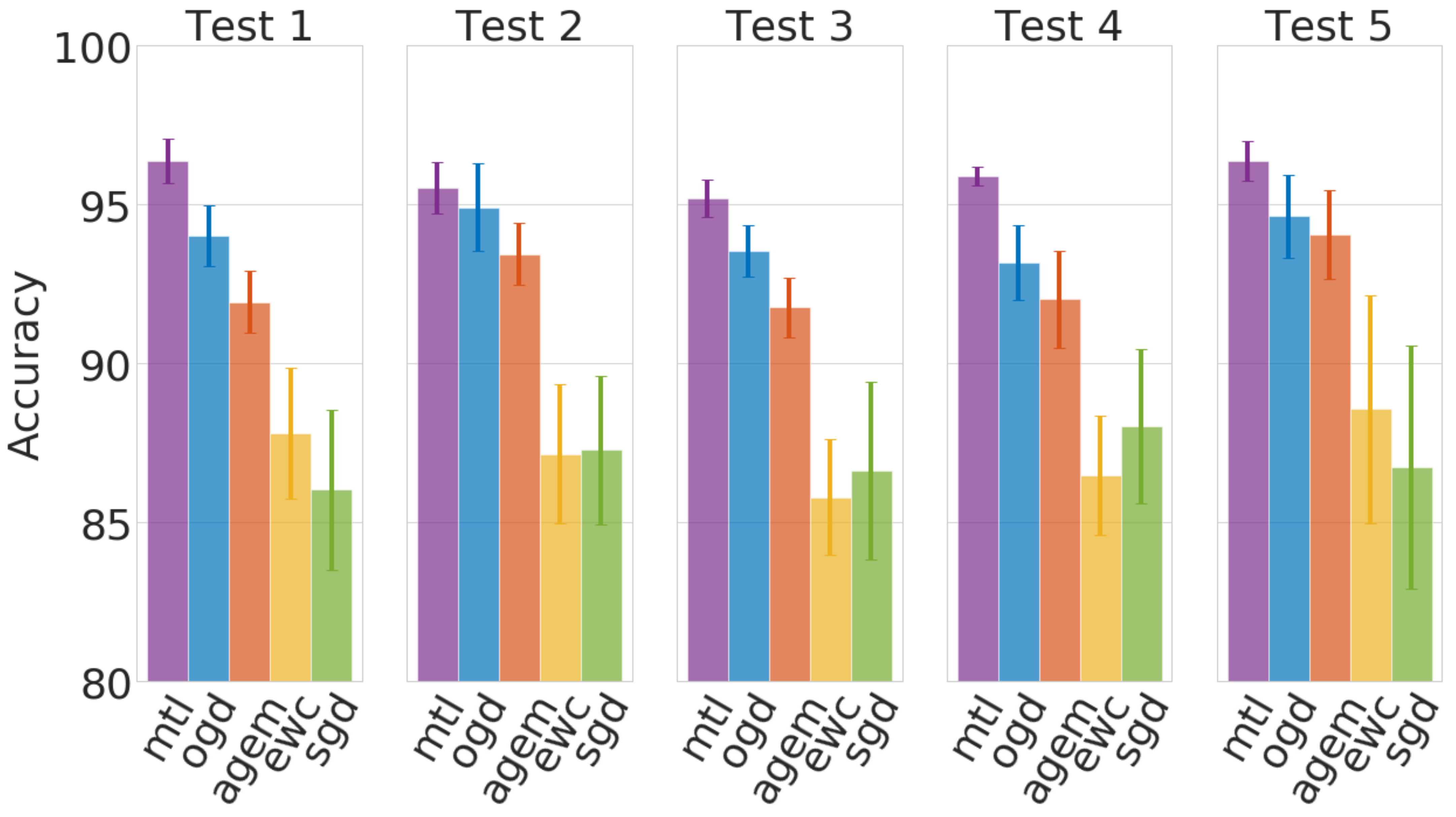}
 \caption{\small \textit{Split Mnist}: Accuracies of multiple continual learning methods. The training regime is the same as that of Figure~\ref{fig:permuted_mnist_training}. The reported value is the accuracy on task 1 after the model being trained on task 2. Different plots correspond to different configurations, \textit{i.e.}, different partitions of the Mnist labels into task 1 and task 2.}
 \label{fig:split_mnist_2class}
\end{figure}

\begin{table}[!tb]
\centering
\setlength\tabcolsep{3pt}
\resizebox{\linewidth}{!}{
\begin{tabular}{lccccc}
\toprule
	 & \multicolumn{5}{c}{\bf Accuracy $\pm$ Std. ($\%$)} \\
	 & \bf Task 1 & \bf Task 2 & \bf Task 3 & \bf Task 4 & \bf Task 5	\\
\midrule
	\sc mtl & $99.6 \pm 0.2 $ & $99.8 \pm 0.1 $ & $98.8 \pm 0.2 $ & $98.2 \pm 0.4 $ & $99.1 \pm 0.2 $\\
	\midrule
	\sc ogd & ${\bf 98.6} \pm 0.8 $ & ${\bf 99.5} \pm 0.1 $ & ${\bf 98.0} \pm 0.5 $ & ${\bf 98.8} \pm 0.5 $ & $99.2 \pm 0.3 $\\
	\sc a-gem & $92.9 \pm 2.6 $ & $96.3 \pm 2.1 $ & $86.5 \pm 1.6 $ & $92.3 \pm 2.3 $ & $99.3 \pm 0.2 $\\
	\sc ewc & $90.2 \pm 5.7 $ & $98.9 \pm 0.2 $ & $91.1 \pm 3.5 $ & $94.4 \pm 2.0 $ & $99.3 \pm 0.2$\\
	\sc sgd & $88.2 \pm 5.9$ & $98.4 \pm 0.9 $ & $90.3 \pm 4.5 $ & $95.2 \pm 1.0 $ & ${\bf 99.4} \pm 0.2 $\\
\bottomrule
\end{tabular}
}
\caption{\small \textit{Split Mnist}\footnotemark : The accuracy of models  for test examples from the indicated class after being trained on all tasks in sequence, except the multi-task setup (\textsc{mtl}). The best continual learning results are highlighted in \textbf{bold}.}
\label{tab:split_mnist_accuracy_5task}
\end{table}
\footnotetext{The accuracy for different assignments of labels to tasks in Table \ref{tab:split_mnist_accuracy_5task} can be found in Appendix~\ref{sec:app_split_mnist}.}

We again generalized this experiment to a longer sequence of tasks by splitting Mnist into 5 tasks, each with two classes. We used a multi-headed architecture as in the 2 task case. We report the accuracy of the fully trained model on examples from each of the 5 classes in  Table~\ref{tab:split_mnist_accuracy_5task}. As in the previous case, we evaluated this on multiple partitions of the labels; the results from other partitions are shown in the appendix. In this setting, OGD performs very closely to the multi-task training benchmark and consistently outperforms the other baselines.

\section{Related Work}
\vspace{-4mm}
\label{sec:related_work}
There is a growing interest in measuring catastrophic forgetting~\citep{toneva2018empirical,kemker2018measuring}, evaluating continual learning algorithms~\citep{farquhar2018towards,hayes2018new,diaz2018don,de2019continual,hsu2018re}, and understating this phenomenon~\citep{nguyen2019toward,farquhar2019unifying}.
The existing work on alleviating catastrophic forgetting can be divided into a few categories.

The \emph{expansion} based methods allocate new neurons or layers or modules to accommodate new tasks while utilizing the shared representation learned from previous ones.   \citet{rusu2016progressive} proposed progressive neural networks in which parameters for the original task are untouched while the architecture is expanded by allocating new sub-networks with fixed capacity to be trained on the new task. 
Similarly, \citet{xiao2014error} proposed a method in which the network not only grows in capacity, but forms a hierarchical structure as new tasks arrive at the model. 
\citet{yoon2018lifelong} proposed a dynamically expanding network that either retrain or expand the network capacity upon arrival of a new task with only the necessary number of units  by splitting/duplicating units and timestamping them.
\citet{draelos2017neurogenesis} used auto-encoder to dynamically decide to add neurons for samples with high loss and whether the older data needs to be retrained or not. 
Along this idea ~\citet{Jerfel2018ReconcilingMA} proposed to use  Dirichlet process mixture of hierarchical
Bayesian models over the parameters of
neural networks to dynamically cope with the new tasks.
Recently, \cite{li2019learn} proposed to utilize the neural architecture search to find the optimal structure for each of the sequential tasks.
These methods avoid storing data and are aligned with neurogenesis in the brain~\cite{aimone2009computational} but may be complex for the current neural network libraries.

In the \emph{regularization} based approaches, catastrophic forgetting is tackled by imposing constraints on the weight updates of the neural network according to some importance measure for previous tasks. The difference lies in the way how importance weights are computed. In Elastic Weight Consolidation (EWC)~\citep{kirkpatrick2017overcoming} the importance weights are the diagonal values of  the Fisher information matrix which approximates the posterior distribution of the weights. Along this Bayesian perspective, \citet{titsias2019functional} proposed to work over the function space rather than the parameters of a deep neural network to avoid forgetting a previous task by constructing and memorizing an approximate posterior belief over the underlying task-specific function. Employing other Bayesian techniques, \citet{nguyen2017variational}  combine online variational inference and recent advances in Monte Carlo methods. \citet{ritter2018online} recursively approximate the posterior after
every task with a Gaussian Laplace approximation of the Hessian for continual learning, and~\citet{ebrahimi2019uncertainty} used uncertainty measures to help continual learning.
\citet{schwarz2018progress} proposed a cycle of active learning (progression) followed by consolidation (compression) that requires no architecture growth and no access to or storing of previous data or tasks which is similar in spirit to distillation based methods of continual learning~\cite{li2017learning,hu2018overcoming}.
Knowledge Distillation~\citep{hinton2015distilling} and its many variants\citep{romero2014fitnets,mirzadeh2019improved} are useful to retain the previous information.

In \citep{zenke2017continual} each parameter accumulates task relevant information over time, and exploits this information to rapidly store new tasks without forgetting old ones. 
\citet{lee2017overcoming} incrementally match the moment of the posterior distribution of the neural network trained on the first and the second task to regularize its update on the latter. 
Other works along this line are \citep{aljundi2018memory,kolouri2019attention} which penalize the weights based on a Hebbian like update rule.
These approaches are well motivated by neuro-biological models of memory~\cite{fusi2005cascade,kaplanis2018continual} and are computationally fast and do not require storing data. However, these \emph{consolidated} weights reduce the degree of freedom of the neural network. In other words, they decrease the effective volume of parameter space to search for a configuration that can satisfy both the old and new tasks.

The \emph{repetition} based methods employ memory systems that store previous data or, alternatively, train a generative model for the first task and replay them interleaved with samples drawn from the new task.
\citet{shin2017continual,kamra2017deep,zhang2019prototype,rios2018closed} learned a generative
model to capture the data distribution of previous tasks, along with the current task's data to train the new model so that the forgetting can be alleviated.
\citet{luders2016continual} used a Neural Turing Machine that enables agents to store long-term memories by progressively employing additional memory components. In the context of Reinforcement learning~\citet{rolnick2018experience} utilized on-policy learning on fresh experiences to adapt rapidly to new tasks, while using off-policy learning with behavioral cloning on replay experience to maintain and modestly enhance performance on past tasks.
\citet{lopez2017gradient} proposed Gradient Episodic Memory (GEM) to efficiently use an episodic storage by following loss gradients on incoming task to the maximum extent while altering them so that they do not interfere with past memories. While minimizing the loss on the current task GEM treats the losses on  the episodic
memories of previous tasks as inequality constraints, avoiding their increase but allowing their decrease. \citet{chaudhry2018efficient} improved GEM by changing the loss function and proposed dubbed Averaged GEM (A-GEM), which enjoys the same or even better performance.
\citet{riemer2018learning} combined experience replay with optimization based meta-learning to enforce gradient alignment across examples in order to learn parameters that make interference based on future gradients less likely. 
A few other works have utilized gradient information to protect previous knowledge.
\citet{he2018overcoming} proposed a variant of the back-propagation algorithm named conceptor-aided backprop that shields gradients against degradation of previously learned tasks. 
\citet{zeng2018continuous} ensure that gradient updates occur only in the orthogonal directions to the input of previous tasks. 
This class of methods also have their root in neuroscience~\citep{kumaran2012generalization} making training samples as identically distributed as possible. However, they need to store a portion of the data or learning a generative model upon them, which might not be possible in some settings, \textit{e.g.}, with user data when privacy matters. Moreover, many of these methods work with the gradients of the loss, which can be close to zero for many samples and therefore convey less information on previous tasks. In contrast, we work with the gradients of the model (logits or predictions) which is the actual knowledge to be preserved on the course of continual learning.
By providing more effective shield of gradients through projecting to the space of previous model gradients, we achieve better protection to previously acquired knowledge, yielding highly competitive results in empirical tests compared to others.

Continual Learning as a sub-field in AI has close connection and ties to other recent efforts in machine learning. 
\emph{Meta learning} algorithms use a data-driven inductive bias to enhance learning new tasks~\citep{Jerfel2018ReconcilingMA,he2018overcoming,vuorio2018meta,al2017continuous,riemer2018learning}. \emph{Few shot learning} also serves the same purpose and can be leveraged in continual learning~\citep{wen2018few,gidaris2018dynamic} and vice versa. The way we treat previous knowledge (i.e. through the model prediction gradients not the actual data) is also related \emph{differential private learning}~\citep{wu2017bolt,li2018differentially,pihur2018differentially,han2018privacy} and \emph{federated learning}~\citep{bonawitz2019towards,smith2017federated,vepakomma2018split}. \emph{Multi-task learning}~\citep{sorokin2019continual}, \emph{curriculum learning}~\citep{bengio2009curriculum}, and \emph{transfer learning}~\citep{pan2009survey,li2019cross} are other related areas helpful to develop better continual learning machines that do not catastrophically forget previous experiences.

\vspace{-3mm}
\section{Conclusion and Outlook}
\vspace{-4mm}

In this paper, we propose to project the current gradient steps to the orthogonal space of neural network predictions on previous data points. The goal is to minimally interfere with the already learned knowledge while gradually stepping towards learning new tasks. We have demonstrated that our method matches or exceeds other state-of-the-art methods on a variety of benchmark experiments. We have observed that OGD is able to retain information over many tasks and achieved particularly strong results on the split Mnist benchmark.

There are several avenues for future study based on this technique. Firstly, because we cannot store gradients for the full datasets there is some forgetting happening. Finding a way to store more gradients or prioritize the important directions would improve the performance. One can also maintain higher-order derivatives of the model for a more accurate representation of previously learned knowledge, at the expense of more memory and computation.
Secondly, we have observed that all methods (including ours) fail considerably when the tasks are dissimilar (for example rotations larger than 90 degrees for the Mnist task). This calls for a lot more future research to be invested in this important yet under-explored problem of continual learning. 
Thirdly, it is observed that our method is sensitive to the learning rate and it sometimes fail to produce comparable results to A-GEM for large learning rates. It's aligned with our expectation that the learning rate is determining the locality and the neighborhood of the search. The gradients of the model predictions at optimal points are a good approximation for the gradients on others if they lie in a close neighborhood. Further work on coping with this would allows OGD to apply to settings where higher learning rates are desired. Another interesting direction for future research is to extend this idea to other types of optimizers such as Adam or Adagrad.

Finally, it is worth noting that the implications of the proposed Orothogonal Gradient Descent goes beyond the standard continual learning setup we described. Firstly, it does not require tasks to be identified and distinguished. OGD minimally interfere with previously seen data points no matter what class they belong to. This makes it applicable when the task shift does not arrive as a distinct event, but rather a gradual shift~\citep{he2019task}. Moreover, it is applicable to standard learning paradigm where one does not have the luxury of iterating over numerous epochs, as it can preserve information that has not yet been strongly encoded in the weights of the network. A principled extension and verification on common gradient neural network training methods is left as future work.

\subsubsection*{Acknowledgements}
The authors would like to thank Dilan Gorur,  Jonathan Schwarz, Jiachen Yang, and Yee Whye Teh for the comments and discussions.

\bibliographystyle{apalike}
\bibliography{main}

\begin{thebibliography}{}

\bibitem[Aimone et~al., 2009]{aimone2009computational}
Aimone, J.~B., Wiles, J., and Gage, F.~H. (2009).
\newblock Computational influence of adult neurogenesis on memory encoding.
\newblock {\em Neuron}, 61(2):187--202.

\bibitem[Al-Shedivat et~al., 2017]{al2017continuous}
Al-Shedivat, M., Bansal, T., Burda, Y., Sutskever, I., Mordatch, I., and
  Abbeel, P. (2017).
\newblock Continuous adaptation via meta-learning in nonstationary and
  competitive environments.
\newblock {\em arXiv preprint arXiv:1710.03641}.

\bibitem[Aljundi et~al., 2018]{aljundi2018memory}
Aljundi, R., Babiloni, F., Elhoseiny, M., Rohrbach, M., and Tuytelaars, T.
  (2018).
\newblock Memory aware synapses: Learning what (not) to forget.
\newblock In {\em Proceedings of the European Conference on Computer Vision
  (ECCV)}, pages 139--154.

\bibitem[Allen-Zhu et~al., 2018]{allen2018convergence}
Allen-Zhu, Z., Li, Y., and Song, Z. (2018).
\newblock A convergence theory for deep learning via over-parameterization.
\newblock {\em arXiv preprint arXiv:1811.03962}.

\bibitem[Azizan and Hassibi, 2018]{azizan2018stochastic}
Azizan, N. and Hassibi, B. (2018).
\newblock Stochastic gradient/mirror descent: Minimax optimality and implicit
  regularization.
\newblock {\em arXiv preprint arXiv:1806.00952}.

\bibitem[Azizan et~al., 2019]{azizan2019stochastic}
Azizan, N., Lale, S., and Hassibi, B. (2019).
\newblock Stochastic mirror descent on overparameterized nonlinear models:
  Convergence, implicit regularization, and generalization.
\newblock {\em arXiv preprint arXiv:1906.03830}.

\bibitem[Bengio et~al., 2009]{bengio2009curriculum}
Bengio, Y., Louradour, J., Collobert, R., and Weston, J. (2009).
\newblock Curriculum learning.
\newblock In {\em Proceedings of the 26th annual international conference on
  machine learning}, pages 41--48. ACM.

\bibitem[Bonawitz et~al., 2019]{bonawitz2019towards}
Bonawitz, K., Eichner, H., Grieskamp, W., Huba, D., Ingerman, A., Ivanov, V.,
  Kiddon, C., Konecny, J., Mazzocchi, S., McMahan, H.~B., et~al. (2019).
\newblock Towards federated learning at scale: System design.
\newblock {\em arXiv preprint arXiv:1902.01046}.

\bibitem[Chaudhry et~al., 2018]{chaudhry2018efficient}
Chaudhry, A., Ranzato, M., Rohrbach, M., and Elhoseiny, M. (2018).
\newblock Efficient lifelong learning with {A-GEM}.
\newblock {\em arXiv preprint arXiv:1812.00420}.

\bibitem[De~Lange et~al., 2019]{de2019continual}
De~Lange, M., Aljundi, R., Masana, M., Parisot, S., Jia, X., Leonardis, A.,
  Slabaugh, G., and Tuytelaars, T. (2019).
\newblock Continual learning: A comparative study on how to defy forgetting in
  classification tasks.
\newblock {\em arXiv preprint arXiv:1909.08383}.

\bibitem[D{\'\i}az-Rodr{\'\i}guez et~al., 2018]{diaz2018don}
D{\'\i}az-Rodr{\'\i}guez, N., Lomonaco, V., Filliat, D., and Maltoni, D.
  (2018).
\newblock Don't forget, there is more than forgetting: new metrics for
  continual learning.
\newblock {\em arXiv preprint arXiv:1810.13166}.

\bibitem[Draelos et~al., 2017]{draelos2017neurogenesis}
Draelos, T.~J., Miner, N.~E., Lamb, C.~C., Cox, J.~A., Vineyard, C.~M.,
  Carlson, K.~D., Severa, W.~M., James, C.~D., and Aimone, J.~B. (2017).
\newblock Neurogenesis deep learning: Extending deep networks to accommodate
  new classes.
\newblock In {\em International Joint Conference on Neural Networks}, pages
  526--533.

\bibitem[Ebrahimi et~al., 2019]{ebrahimi2019uncertainty}
Ebrahimi, S., Elhoseiny, M., Darrell, T., and Rohrbach, M. (2019).
\newblock Uncertainty-guided continual learning with bayesian neural networks.
\newblock {\em arXiv preprint arXiv:1906.02425}.

\bibitem[Farquhar and Gal, 2018]{farquhar2018towards}
Farquhar, S. and Gal, Y. (2018).
\newblock Towards robust evaluations of continual learning.
\newblock {\em arXiv preprint arXiv:1805.09733}.

\bibitem[Farquhar and Gal, 2019]{farquhar2019unifying}
Farquhar, S. and Gal, Y. (2019).
\newblock A unifying bayesian view of continual learning.
\newblock {\em arXiv preprint arXiv:1902.06494}.

\bibitem[French, 1999]{french1999catastrophic}
French, R.~M. (1999).
\newblock Catastrophic forgetting in connectionist networks.
\newblock {\em Trends in cognitive sciences}, 3(4):128--135.

\bibitem[Fusi et~al., 2005]{fusi2005cascade}
Fusi, S., Drew, P.~J., and Abbott, L.~F. (2005).
\newblock Cascade models of synaptically stored memories.
\newblock {\em Neuron}, 45(4):599--611.

\bibitem[Gidaris and Komodakis, 2018]{gidaris2018dynamic}
Gidaris, S. and Komodakis, N. (2018).
\newblock Dynamic few-shot visual learning without forgetting.
\newblock In {\em Proceedings of the IEEE Conference on Computer Vision and
  Pattern Recognition}, pages 4367--4375.

\bibitem[Goodfellow et~al., 2013]{goodfellow2013empirical}
Goodfellow, I.~J., Mirza, M., Xiao, D., Courville, A., and Bengio, Y. (2013).
\newblock An empirical investigation of catastrophic forgetting in
  gradient-based neural networks.
\newblock {\em arXiv preprint arXiv:1312.6211}.

\bibitem[Han et~al., 2018]{han2018privacy}
Han, B., Tsang, I.~W., Xiao, X., Chen, L., Fung, S.-f., and Yu, C.~P. (2018).
\newblock Privacy-preserving stochastic gradual learning.
\newblock {\em arXiv preprint arXiv:1810.00383}.

\bibitem[Hassabis et~al., 2017]{hassabis2017neuroscience}
Hassabis, D., Kumaran, D., Summerfield, C., and Botvinick, M. (2017).
\newblock Neuroscience-inspired artificial intelligence.
\newblock {\em Neuron}, 95(2):245--258.

\bibitem[Hayes et~al., 2018]{hayes2018new}
Hayes, T.~L., Kemker, R., Cahill, N.~D., and Kanan, C. (2018).
\newblock New metrics and experimental paradigms for continual learning.
\newblock In {\em Proceedings of the IEEE Conference on Computer Vision and
  Pattern Recognition Workshops}, pages 2031--2034.

\bibitem[He and Jaeger, 2018]{he2018overcoming}
He, X. and Jaeger, H. (2018).
\newblock Overcoming catastrophic interference using conceptor-aided
  backpropagation.
\newblock In {\em ICLR 2018}.

\bibitem[He et~al., 2019]{he2019task}
He, X., Sygnowski, J., Galashov, A., Rusu, A.~A., Teh, Y.~W., and Pascanu, R.
  (2019).
\newblock Task agnostic continual learning via meta learning.
\newblock {\em arXiv preprint arXiv:1906.05201}.

\bibitem[Hinton et~al., 2015]{hinton2015distilling}
Hinton, G., Vinyals, O., and Dean, J. (2015).
\newblock Distilling the knowledge in a neural network.
\newblock {\em arXiv preprint arXiv:1503.02531}.

\bibitem[Hinton et~al., 2006]{hinton2006fast}
Hinton, G.~E., Osindero, S., and Teh, Y.-W. (2006).
\newblock A fast learning algorithm for deep belief nets.
\newblock {\em Neural computation}, 18(7):1527--1554.

\bibitem[Hsu et~al., 2018]{hsu2018re}
Hsu, Y.-C., Liu, Y.-C., and Kira, Z. (2018).
\newblock Re-evaluating continual learning scenarios: A categorization and case
  for strong baselines.
\newblock {\em arXiv preprint arXiv:1810.12488}.

\bibitem[Hu et~al., 2018]{hu2018overcoming}
Hu, W., Lin, Z., Liu, B., Tao, C., Tao, Z., Ma, J., Zhao, D., and Yan, R.
  (2018).
\newblock Overcoming catastrophic forgetting for continual learning via model
  adaptation.

\bibitem[Jerfel et~al., 2019]{Jerfel2018ReconcilingMA}
Jerfel, G., Grant, E., Griffiths, T.~L., and Heller, K.~A. (2019).
\newblock Reconciling meta-learning and continual learning with online mixtures
  of tasks.
\newblock In {\em NeurIPS}.

\bibitem[Kamra et~al., 2017]{kamra2017deep}
Kamra, N., Gupta, U., and Liu, Y. (2017).
\newblock Deep generative dual memory network for continual learning.
\newblock {\em arXiv preprint arXiv:1710.10368}.

\bibitem[Kaplanis et~al., 2018]{kaplanis2018continual}
Kaplanis, C., Shanahan, M., and Clopath, C. (2018).
\newblock Continual reinforcement learning with complex synapses.
\newblock {\em arXiv preprint arXiv:1802.07239}.

\bibitem[Kemker et~al., 2018]{kemker2018measuring}
Kemker, R., McClure, M., Abitino, A., Hayes, T.~L., and Kanan, C. (2018).
\newblock Measuring catastrophic forgetting in neural networks.
\newblock In {\em Thirty-second AAAI conference on artificial intelligence}.

\bibitem[Kirkpatrick et~al., 2017]{kirkpatrick2017overcoming}
Kirkpatrick, J., Pascanu, R., Rabinowitz, N., Veness, J., Desjardins, G., Rusu,
  A.~A., Milan, K., Quan, J., Ramalho, T., Grabska-Barwinska, A., et~al.
  (2017).
\newblock Overcoming catastrophic forgetting in neural networks.
\newblock {\em Proceedings of the national academy of sciences},
  114(13):3521--3526.

\bibitem[Kolouri et~al., 2019]{kolouri2019attention}
Kolouri, S., Ketz, N., Zou, X., Krichmar, J., and Pilly, P. (2019).
\newblock Attention-based structural-plasticity.
\newblock {\em arXiv preprint arXiv:1903.06070}.

\bibitem[Krizhevsky et~al., 2012]{krizhevsky2012imagenet}
Krizhevsky, A., Sutskever, I., and Hinton, G.~E. (2012).
\newblock Imagenet classification with deep convolutional neural networks.
\newblock In {\em Advances in neural information processing systems}, pages
  1097--1105.

\bibitem[Kumaran and McClelland, 2012]{kumaran2012generalization}
Kumaran, D. and McClelland, J.~L. (2012).
\newblock Generalization through the recurrent interaction of episodic
  memories: a model of the hippocampal system.
\newblock {\em Psychological review}, 119(3):573.

\bibitem[Lee et~al., 2017]{lee2017overcoming}
Lee, S.-W., Kim, J.-H., Jun, J., Ha, J.-W., and Zhang, B.-T. (2017).
\newblock Overcoming catastrophic forgetting by incremental moment matching.
\newblock In {\em Neural information processing systems}, pages 4652--4662.

\bibitem[Li et~al., 2019a]{li2019cross}
Li, A., Hu, H., Mirowski, P., and Farajtabar, M. (2019a).
\newblock Cross-view policy learning for street navigation.
\newblock {\em arXiv preprint arXiv:1906.05930}.

\bibitem[Li et~al., 2018]{li2018differentially}
Li, C., Zhou, P., Xiong, L., Wang, Q., and Wang, T. (2018).
\newblock Differentially private distributed online learning.
\newblock {\em IEEE Transactions on Knowledge and Data Engineering},
  30(8):1440--1453.

\bibitem[Li et~al., 2019b]{li2019learn}
Li, X., Zhou, Y., Wu, T., Socher, R., and Xiong, C. (2019b).
\newblock Learn to grow: A continual structure learning framework for
  overcoming catastrophic forgetting.
\newblock {\em arXiv preprint arXiv:1904.00310}.

\bibitem[Li and Liang, 2018]{li2018learning}
Li, Y. and Liang, Y. (2018).
\newblock Learning overparameterized neural networks via stochastic gradient
  descent on structured data.
\newblock In {\em Advances in Neural Information Processing Systems}, pages
  8157--8166.

\bibitem[Li and Hoiem, 2017]{li2017learning}
Li, Z. and Hoiem, D. (2017).
\newblock Learning without forgetting.
\newblock {\em IEEE transactions on pattern analysis and machine intelligence},
  40(12):2935--2947.

\bibitem[Lopez-Paz and Ranzato, 2017]{lopez2017gradient}
Lopez-Paz, D. and Ranzato, M. (2017).
\newblock Gradient episodic memory for continual learning.
\newblock In {\em Advances in Neural Information Processing Systems}, pages
  6467--6476.

\bibitem[L{\"u}ders et~al., 2016]{luders2016continual}
L{\"u}ders, B., Schl{\"a}ger, M., and Risi, S. (2016).
\newblock Continual learning through evolvable neural turing machines.
\newblock In {\em NIPS 2016 Workshop on Continual Learning and Deep Networks
  (CLDL 2016)}.

\bibitem[McCloskey and Cohen, 1989]{mccloskey1989catastrophic}
McCloskey, M. and Cohen, N.~J. (1989).
\newblock Catastrophic interference in connectionist networks: The sequential
  learning problem.
\newblock In {\em Psychology of learning and motivation}, volume~24, pages
  109--165. Elsevier.

\bibitem[Mirzadeh et~al., 2019]{mirzadeh2019improved}
Mirzadeh, S.-I., Farajtabar, M., Li, A., and Ghasemzadeh, H. (2019).
\newblock Improved knowledge distillation via teacher assistant: Bridging the
  gap between student and teacher.
\newblock {\em arXiv preprint arXiv:1902.03393}.

\bibitem[Nguyen et~al., 2019]{nguyen2019toward}
Nguyen, C.~V., Achille, A., Lam, M., Hassner, T., Mahadevan, V., and Soatto, S.
  (2019).
\newblock Toward understanding catastrophic forgetting in continual learning.
\newblock {\em arXiv preprint arXiv:1908.01091}.

\bibitem[Nguyen et~al., 2017]{nguyen2017variational}
Nguyen, C.~V., Li, Y., Bui, T.~D., and Turner, R.~E. (2017).
\newblock Variational continual learning.
\newblock {\em arXiv preprint arXiv:1710.10628}.

\bibitem[Pan and Yang, 2009]{pan2009survey}
Pan, S.~J. and Yang, Q. (2009).
\newblock A survey on transfer learning.
\newblock {\em IEEE Transactions on knowledge and data engineering},
  22(10):1345--1359.

\bibitem[Parisi et~al., 2019]{parisi2019continual}
Parisi, G.~I., Kemker, R., Part, J.~L., Kanan, C., and Wermter, S. (2019).
\newblock Continual lifelong learning with neural networks: A review.
\newblock {\em Neural Networks}.

\bibitem[Pihur et~al., 2018]{pihur2018differentially}
Pihur, V., Korolova, A., Liu, F., Sankuratripati, S., Yung, M., Huang, D., and
  Zeng, R. (2018).
\newblock Differentially-private" draw and discard" machine learning.
\newblock {\em arXiv preprint arXiv:1807.04369}.

\bibitem[Ratcliff, 1990]{ratcliff1990connectionist}
Ratcliff, R. (1990).
\newblock Connectionist models of recognition memory: constraints imposed by
  learning and forgetting functions.
\newblock {\em Psychological review}, 97(2):285.

\bibitem[Riemer et~al., 2018]{riemer2018learning}
Riemer, M., Cases, I., Ajemian, R., Liu, M., Rish, I., Tu, Y., and Tesauro, G.
  (2018).
\newblock Learning to learn without forgetting by maximizing transfer and
  minimizing interference.
\newblock {\em arXiv preprint arXiv:1810.11910}.

\bibitem[Rios and Itti, 2018]{rios2018closed}
Rios, A. and Itti, L. (2018).
\newblock Closed-loop gan for continual learning.
\newblock {\em arXiv preprint arXiv:1811.01146}.

\bibitem[Ritter et~al., 2018]{ritter2018online}
Ritter, H., Botev, A., and Barber, D. (2018).
\newblock Online structured laplace approximations for overcoming catastrophic
  forgetting.
\newblock In {\em Advances in Neural Information Processing Systems}, pages
  3738--3748.

\bibitem[Rolnick et~al., 2018]{rolnick2018experience}
Rolnick, D., Ahuja, A., Schwarz, J., Lillicrap, T.~P., and Wayne, G. (2018).
\newblock Experience replay for continual learning.
\newblock {\em arXiv preprint arXiv:1811.11682}.

\bibitem[Romero et~al., 2014]{romero2014fitnets}
Romero, A., Ballas, N., Kahou, S.~E., Chassang, A., Gatta, C., and Bengio, Y.
  (2014).
\newblock Fitnets: Hints for thin deep nets.
\newblock {\em arXiv preprint arXiv:1412.6550}.

\bibitem[Rusu et~al., 2016]{rusu2016progressive}
Rusu, A.~A., Rabinowitz, N.~C., Desjardins, G., Soyer, H., Kirkpatrick, J.,
  Kavukcuoglu, K., Pascanu, R., and Hadsell, R. (2016).
\newblock Progressive neural networks.
\newblock {\em arXiv preprint arXiv:1606.04671}.

\bibitem[Schwarz et~al., 2018]{schwarz2018progress}
Schwarz, J., Czarnecki, W., Luketina, J., Grabska-Barwinska, A., Teh, Y.~W.,
  Pascanu, R., and Hadsell, R. (2018).
\newblock Progress \& compress: A scalable framework for continual learning.
\newblock In {\em International Conference on Machine Learning}, pages
  4535--4544.

\bibitem[Shin et~al., 2017]{shin2017continual}
Shin, H., Lee, J.~K., Kim, J., and Kim, J. (2017).
\newblock Continual learning with deep generative replay.
\newblock In {\em Advances in Neural Information Processing Systems}, pages
  2990--2999.

\bibitem[Simonyan and Zisserman, 2014]{simonyan2014very}
Simonyan, K. and Zisserman, A. (2014).
\newblock Very deep convolutional networks for large-scale image recognition.
\newblock {\em arXiv preprint arXiv:1409.1556}.

\bibitem[Smith et~al., 2017]{smith2017federated}
Smith, V., Chiang, C.-K., Sanjabi, M., and Talwalkar, A.~S. (2017).
\newblock Federated multi-task learning.
\newblock In {\em Advances in Neural Information Processing Systems}, pages
  4424--4434.

\bibitem[Sorokin and Burtsev, 2019]{sorokin2019continual}
Sorokin, A.~Y. and Burtsev, M.~S. (2019).
\newblock Continual and multi-task reinforcement learning with shared episodic
  memory.
\newblock {\em arXiv preprint arXiv:1905.02662}.

\bibitem[Srivastava et~al., 2013]{srivastava2013compete}
Srivastava, R.~K., Masci, J., Kazerounian, S., Gomez, F., and Schmidhuber, J.
  (2013).
\newblock Compete to compute.
\newblock In {\em Advances in neural information processing systems}, pages
  2310--2318.

\bibitem[Thrun and Mitchell, 1995]{thrun1995lifelong}
Thrun, S. and Mitchell, T.~M. (1995).
\newblock Lifelong robot learning.
\newblock {\em Robotics and autonomous systems}, 15(1-2):25--46.

\bibitem[Titsias et~al., 2019]{titsias2019functional}
Titsias, M.~K., Schwarz, J., Matthews, A. G. d.~G., Pascanu, R., and Teh, Y.~W.
  (2019).
\newblock Functional regularisation for continual learning using gaussian
  processes.
\newblock {\em arXiv preprint arXiv:1901.11356}.

\bibitem[Toneva et~al., 2018]{toneva2018empirical}
Toneva, M., Sordoni, A., Combes, R. T.~d., Trischler, A., Bengio, Y., and
  Gordon, G.~J. (2018).
\newblock An empirical study of example forgetting during deep neural network
  learning.
\newblock {\em arXiv preprint arXiv:1812.05159}.

\bibitem[Vepakomma et~al., 2018]{vepakomma2018split}
Vepakomma, P., Gupta, O., Swedish, T., and Raskar, R. (2018).
\newblock Split learning for health: Distributed deep learning without sharing
  raw patient data.
\newblock {\em arXiv preprint arXiv:1812.00564}.

\bibitem[Vuorio et~al., 2018]{vuorio2018meta}
Vuorio, R., Cho, D.-Y., Kim, D., and Kim, J. (2018).
\newblock Meta continual learning.
\newblock {\em arXiv preprint arXiv:1806.06928}.

\bibitem[Wen et~al., 2018]{wen2018few}
Wen, J., Cao, Y., and Huang, R. (2018).
\newblock Few-shot self reminder to overcome catastrophic forgetting.
\newblock {\em arXiv preprint arXiv:1812.00543}.

\bibitem[Wu et~al., 2017]{wu2017bolt}
Wu, X., Li, F., Kumar, A., Chaudhuri, K., Jha, S., and Naughton, J. (2017).
\newblock Bolt-on differential privacy for scalable stochastic gradient
  descent-based analytics.
\newblock In {\em Proceedings of the 2017 ACM International Conference on
  Management of Data}, pages 1307--1322. ACM.

\bibitem[Xiao et~al., 2014]{xiao2014error}
Xiao, T., Zhang, J., Yang, K., Peng, Y., and Zhang, Z. (2014).
\newblock Error-driven incremental learning in deep convolutional neural
  network for large-scale image classification.
\newblock In {\em Proceedings of the 22nd ACM international conference on
  Multimedia}, pages 177--186. ACM.

\bibitem[Yoon et~al., 2018]{yoon2018lifelong}
Yoon, J., Yang, E., Lee, J., and Hwang, S.~J. (2018).
\newblock Lifelong learning with dynamically expandable networks.
\newblock In {\em Sixth International Conference on Learning Representations}.
  ICLR.

\bibitem[Zeng et~al., 2018]{zeng2018continuous}
Zeng, G., Chen, Y., Cui, B., and Yu, S. (2018).
\newblock Continuous learning of context-dependent processing in neural
  networks.
\newblock {\em arXiv preprint arXiv:1810.01256}.

\bibitem[Zenke et~al., 2017]{zenke2017continual}
Zenke, F., Poole, B., and Ganguli, S. (2017).
\newblock Continual learning through synaptic intelligence.
\newblock In {\em Proceedings of the 34th International Conference on Machine
  Learning-Volume 70}, pages 3987--3995. JMLR.

\bibitem[Zhang et~al., 2019]{zhang2019prototype}
Zhang, M., Wang, T., Lim, J.~H., and Feng, J. (2019).
\newblock Prototype reminding for continual learning.
\newblock {\em arXiv preprint arXiv:1905.09447}.

\end{thebibliography}

\begin{appendix}
\onecolumn

\section{Additional Experiment Information}

\subsection{Variants of Orthogonal Gradient Descent}
\label{sec:app_variants}

As described in Section~\ref{sec:proposed_ogd} we test OGD with three settings for the which gradients to store, and 3 settings for how many gradients to store. OGD-ALL stores the gradients with respect to all logits of the model. OGD-AVG stores the gradients with respect to the average of all logits. OGD-GTL stores the gradient with respect to the ground truth logit. We run tests storing 20, 200, and 2000 gradients. Table~\ref{tab:ogd_grad_methods} summarizes the results of this experiment. We observe that increasing the number of gradients improves performance across the board (which is expected). We observe the OGD-GTL and OGD-ALL have similar performance in most cases, with a bit of an edge to OGD-GTL. OGD-AVG performs worse in most cases.

\begin{table*}[h]
\centering
\begin{tabular}{l|ccc}
\toprule
 rotated & \multicolumn{3}{c}{Accuracy $\pm$ Std (\%)} \\
Mnist & 20 Grads & 200 Grads & 2000 Grads \\
\midrule
all & 75.7 $\pm$ 2.6 & 79.9 $\pm$ 1.4 & 86.6 $\pm$ 1.0 \\
average & 75.3 $\pm$ 2.4 & 75.5 $\pm$ 1.4 & 77.7 $\pm$ 1.6 \\
ground truth & 76.4 $\pm$ 2.2 & 82.9 $\pm$ 1.6 & 87.1 $\pm$ 1.1 \\
\bottomrule

\toprule
 permuted & \multicolumn{3}{c}{Accuracy $\pm$ Std (\%)} \\
Mnist & 20 Grads & 200 Grads & 2000 Grads \\
\midrule
all & 87.3 $\pm$ 2.8 & 89.7 $\pm$ 1.5 & 90.5 $\pm$ 0.9 \\
average & 86.8 $\pm$ 1.4 & 86.9 $\pm$ 1.4 & 89.4 $\pm$ 1.7 \\
ground truth & 86.5 $\pm$ 1.5 & 89.4 $\pm$ 1.0 & 91.4 $\pm$ 1.7 \\
\bottomrule
\end{tabular}
\label{tab:ogd_grad_methods}
\caption{The performance of various OGD gradient methods as a function of number of gradients stored on rotated Mnist (top) and permuted Mnist (bottom).  Numbers are the accuracy on task 1 after fully training on task 2.}
\end{table*}

\subsection{Increased Training Epochs}

We  study the effect that increasing the number of training epochs has on the performance of the different training methods on permuted Mnist. For the Mnist experiments in the Section~\ref{sec:Experiments}, we train for 5 epochs per task, which is enough to achieve $93\%$ accuracy on vanilla Mnist classification and is in the regime short enough to avoid over-fitting. In order to determine whether increased training time has an effect on the performance in the multi-task setting, we train a classifier on 2-task permuted Mnist running each task training for 20, 40, 80, and 120 epochs and report the classification accuracy on task 1 after task 2 has finished. The results are shown in  Figure~\ref{fig:epoch_sweep}.  
Note that A-GEM and OGD have maintained competitive performance with increasing number of epochs while in the case of SGD and EWC the performance first increases and then drops.

\begin{figure}[h]
 \centering
 \includegraphics[width=0.48\textwidth]{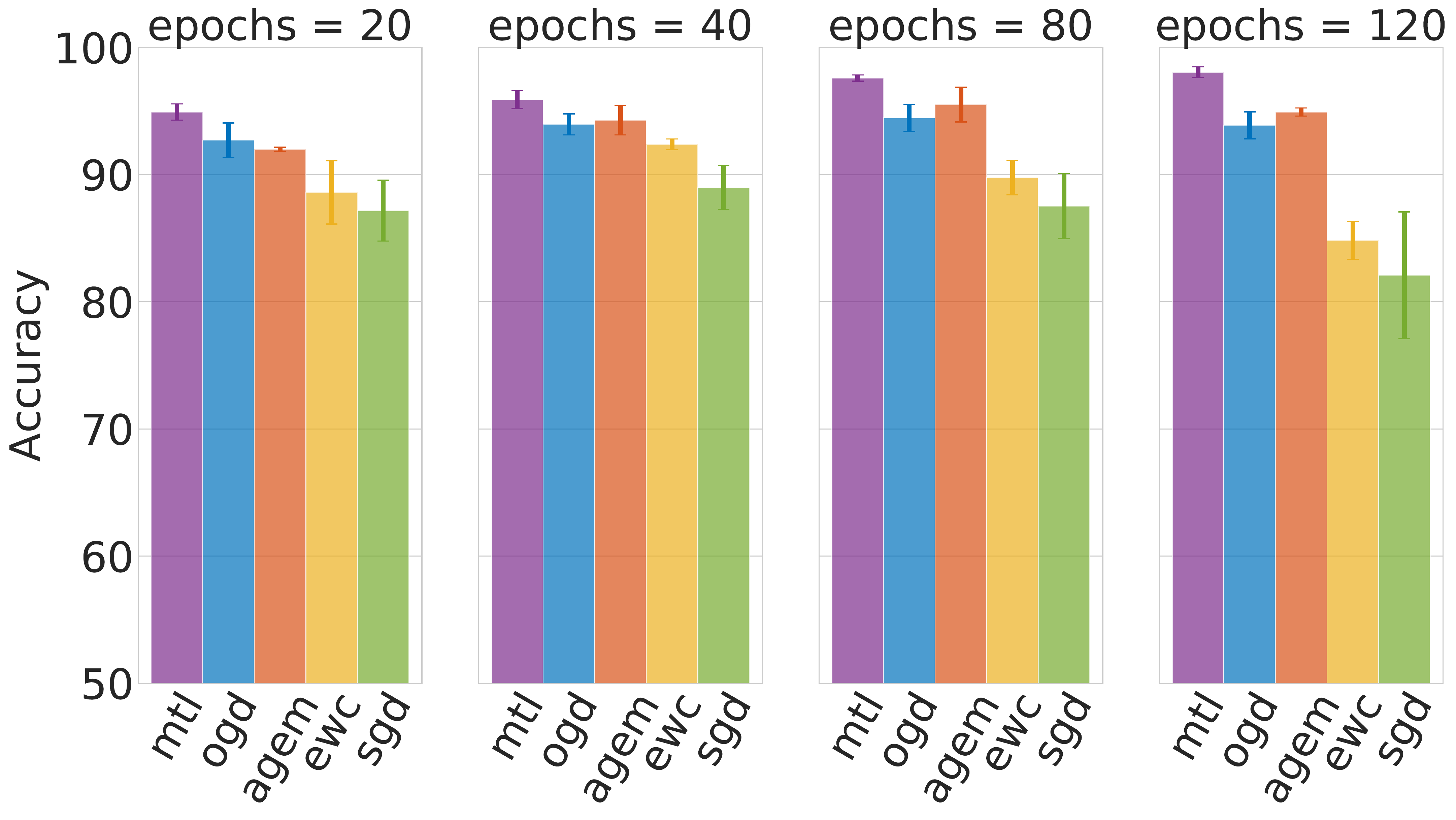}
 \caption{The performance of OGD versus others as a function of the number of training epochs for each task on permuted Mnist. 
 }
 \label{fig:epoch_sweep}
\end{figure}

\subsection{Split Mnist}
\label{sec:app_split_mnist}

We present the results of the split Mnist study described in Section~\ref{sec:split_mnist} on 3 other instances of the split Mnist task. These instances differ by the way the Mnist classes are split into tasks and the order in which the tasks are presented. Tables~\ref{tab:split_mnist_accuracy_5task_test2},~\ref{tab:split_mnist_accuracy_5task_test3}, and~\ref{tab:split_mnist_accuracy_5task_test4} show the results on these tests.  We can see that the ordering of the task methods is preserved in all tests: MTL and OGD are very close in performance, with a gap before A-GEM, and finally EWC and SGD.

\begin{table*}[h]
\centering
\begin{tabular}{c|ccccc}
\toprule
	 & \multicolumn{5}{c}{Accuracy $\pm$ Std. (\%)} \\
	 & Task 1 & Task 2 & Task 3 & Task 4 & Task 5	\\
	\midrule
	mtl & $99.6 \pm 0.2$ & $98.5 \pm 0.4$ & $97.7 \pm 0.4$ & $96.8 \pm 1.0$ & $98.7 \pm 0.3$\\
	ogd & $99.6 \pm 0.4$ & $97.7 \pm 0.1$ & $97.3 \pm 0.5$ & $98.0 \pm 0.9$ & $99.3 \pm 0.1$\\
	agem & $99.2 \pm 0.6$ & $91.4 \pm 3.7$ & $91.4 \pm 0.9$ & $87.1 \pm 3.9$ & $98.9 \pm 0.3$\\
	ewc & $97.0 \pm 3.2$ & $92.7 \pm 3.8$ & $91.9 \pm 5.7$ & $94.3 \pm 2.2$ & $99.2 \pm 0.6$\\
	sgd & $97.4 \pm 2.4$ & $92.2 \pm 3.5$ & $89.2 \pm 8.6$ & $94.5 \pm 1.4$ & $99.1 \pm 0.3$\\
\bottomrule
\end{tabular}
\caption{The accuracy of models trained by different methods on split Mnist. The reported values are the accuracy of the model for test examples from the indicated class after the model has been trained on all tasks in sequence. This table contains the same settings as Table~\ref{tab:split_mnist_accuracy_5task}, but with a different order of Mnist classes assigned to the tasks.}
\label{tab:split_mnist_accuracy_5task_test2}
\end{table*}

\begin{table*}[h]
\centering
\begin{tabular}{c|ccccc}
\toprule
	 & \multicolumn{5}{c}{Accuracy $\pm$ Std. (\%)} \\
	 & Task 1 & Task 2 & Task 3 & Task 4 & Task 5	\\
	\midrule
	mtl & $99.4 \pm 0.2$ & $99.2 \pm 0.3$ & $98.6 \pm 0.4$ & $99.7 \pm 0.3$ & $98.6 \pm 0.5$\\
	ogd & $99.0 \pm 0.4$ & $98.6 \pm 0.1$ & $98.0 \pm 0.2$ & $99.6 \pm 0.3$ & $99.6 \pm 0.2$\\
	agem & $94.1 \pm 2.9$ & $93.8 \pm 5.5$ & $90.6 \pm 2.2$ & $99.4 \pm 0.3$ & $99.4 \pm 0.3$\\
	ewc & $94.8 \pm 2.9$ & $95.3 \pm 3.1$ & $95.5 \pm 0.6$ & $99.3 \pm 0.2$ & $99.3 \pm 0.2$\\
	sgd & $94.6 \pm 2.1$ & $96.3 \pm 1.2$ & $95.0 \pm 1.6$ & $99.3 \pm 0.4$ & $99.3 \pm 0.2$\\
\bottomrule
\end{tabular}
\caption{The accuracy of models trained by different methods on split Mnist. The reported values are the accuracy of the model for test examples from the indicated class after the model has been trained on all tasks in sequence.  This table contains the same settings as Table~\ref{tab:split_mnist_accuracy_5task}, but with a different order of Mnist classes assigned to the tasks. }
\label{tab:split_mnist_accuracy_5task_test3}
\end{table*}

\begin{table*}[h]
\centering
\begin{tabular}{c|ccccc}
\toprule
	 & \multicolumn{5}{c}{Accuracy $\pm$ Std. (\%)} \\
	 & Task 1 & Task 2 & Task 3 & Task 4 & Task 5	\\
	\midrule
	mtl & $98.4 \pm 0.2$ & $100.0 \pm 0.0$ & $98.6 \pm 0.3$ & $99.5 \pm 0.2$ & $98.9 \pm 0.5$\\
	ogd & $98.1 \pm 0.8$ & $99.9 \pm 0.1$ & $97.8 \pm 0.6$ & $99.4 \pm 0.3$ & $99.5 \pm 0.3$\\
	agem & $92.1 \pm 2.7$ & $93.8 \pm 8.2$ & $93.0 \pm 3.5$ & $98.6 \pm 0.5$ & $99.5 \pm 0.3$\\
	ewc & $92.5 \pm 2.2$ & $98.1 \pm 3.0$ & $94.0 \pm 0.9$ & $99.4 \pm 0.2$ & $99.5 \pm 0.3$\\
	sgd & $89.6 \pm 4.4$ & $98.9 \pm 1.0$ & $89.1 \pm 7.9$ & $98.9 \pm 0.7$ & $99.5 \pm 0.3$\\
\bottomrule
\end{tabular}
\caption{The accuracy of models trained by different methods on split Mnist. The reported values are the accuracy of the model for test examples from the indicated class after the model has been trained on all tasks in sequence.  This table contains the same settings as Table~\ref{tab:split_mnist_accuracy_5task}, but with a different order of Mnist classes assigned to the tasks. }
\label{tab:split_mnist_accuracy_5task_test4}
\end{table*}

\end{appendix}

\end{document}